\DeclarePairedDelimiter{\nint}\lfloor\rceil
    \newcolumntype{L}{>{\raggedright\arraybackslash}X}
\newcommand{\no}{\noindent}
\newtheorem{lemma}{Lemma}[section]
\title{\LARGE \bf
TerraPN: Unstructured Terrain Navigation using Online Self-Supervised Learning 
}
\author{Adarsh Jagan Sathyamoorthy, Kasun Weerakoon, Tianrui Guan, Jing Liang and Dinesh Manocha \\
\thanks{This research was supported by Army Cooperative Agreement W911NF2120076.}
\small{Supplemental version including Tech Report, and Video at \url{http://gamma.umd.edu/terrapn/}}}
\begin{document}

\maketitle
\thispagestyle{empty}
\pagestyle{empty}

\begin{abstract}
We present TerraPN, a novel method that learns the surface properties (traction, bumpiness, deformability, etc.) of complex outdoor terrains directly from robot-terrain interactions through self-supervised learning, and uses it for autonomous robot navigation. Our method uses RGB images of terrain surfaces and the robot's velocities as inputs, and the IMU vibrations and odometry errors experienced by the robot as labels for self-supervision. Our method computes a surface cost map that differentiates smooth, high-traction surfaces (low navigation costs) from bumpy, slippery, deformable surfaces (high navigation costs). We compute the cost map by non-uniformly sampling patches from the input RGB image by detecting boundaries between surfaces resulting in low inference times (47.27\% lower) compared to uniform sampling and existing segmentation methods. We present a novel navigation algorithm that accounts for a surface's cost, computes cost-based acceleration limits for the robot, and dynamically feasible, collision-free trajectories. TerraPN's surface cost prediction can be trained in $\sim 25$ minutes for five different surfaces, compared to several hours for previous learning-based segmentation methods. In terms of navigation, our method outperforms previous works in terms of success rates (up to 35.84\% higher), vibration cost of the trajectories (up to 21.52\% lower), and slowing the robot on bumpy, deformable surfaces (up to 46.76\% slower) in different scenarios.


\end{abstract}

\section{Introduction}
Autonomous robots are currently being used for a variety of outdoor applications such as food/grocery delivery, agriculture, surveillance, planetary exploration, etc. designed for these applications need to account for a terrain's geometric properties such as slope or elevation changes, as well as its surface characteristics such as texture, bumpiness (level of undulations), softness/deformability, etc. to compute smooth and efficient robot trajectories \cite{weerakoon2021terp,sriram-siva-1,kahn2020badgr}. 

This is because, apart from a terrain's slope and elevation, its surface properties (texture, bumpiness, and deformability) govern its navigability for a robot. For instance, a surface's texture determines the traction experienced by the robot, its bumpiness determines the vibrations experienced, and deformability \cite{where-should-i-walk,rover-cmu} determines whether a robot could get stuck or experience wheel slips (e.g. in mud). Other factors that affect navigability over a terrain involve the robot's properties such as dynamics, inertia, physical dimensions, velocity limits, etc. 

Therefore, a key issue for performing smooth robot navigation on different terrains is perceiving and characterizing the surface properties. To this end, many works in computer vision, specifically semantic segmentation \cite{guan2021ganav,ttm,geo-visual} based on supervised learning, have demonstrated good terrain classification capabilities on RGB images. However, they rely on large image datasets with different classes of terrains annotated by humans. Such datasets do not account for a robot's properties and might misclassify a traversable terrain for a robot as non-traversable (or vice-versa). In addition, their classification outputs must be converted into quantities that measure a terrain's degree of navigability (costs) to be used for planning and navigation \cite{semantic-mapping,trav-analysis-terrain-mapping}. 


\begin{figure}[t]
      \centering
      \includegraphics[height=5.5cm, width=\columnwidth]{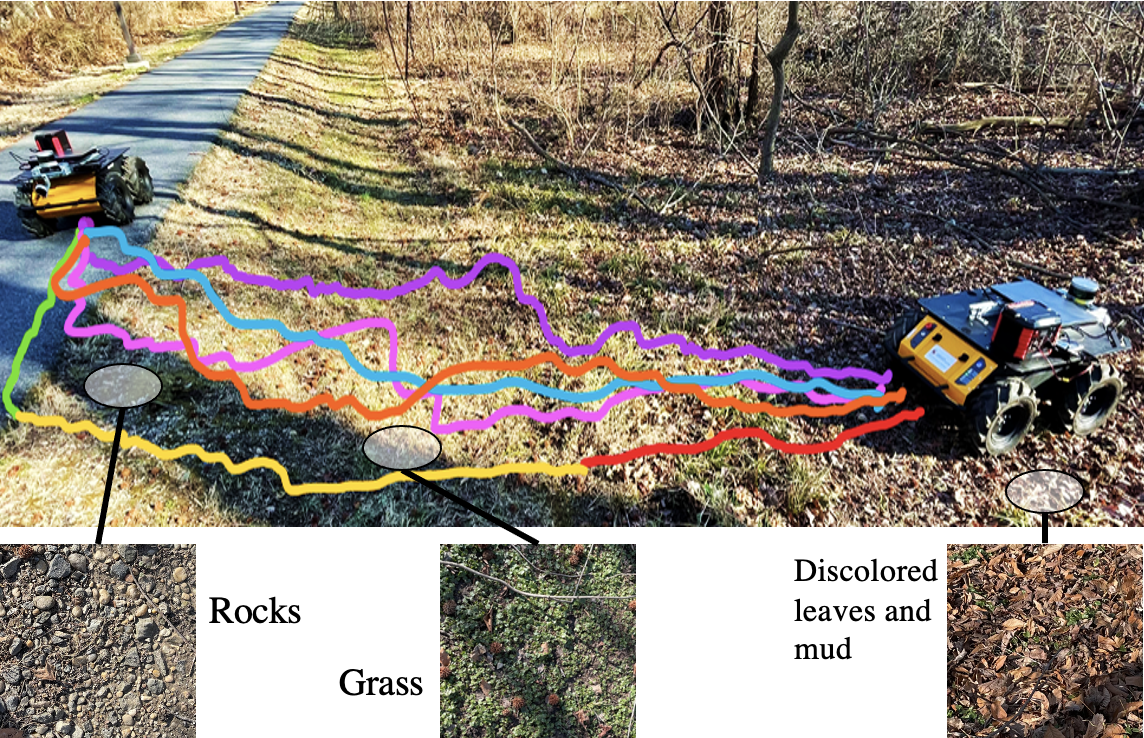}
      \caption {\small{The trajectories of our method, TerraPN (green, yellow, red corresponding to fast, intermediate and slow speeds respectively), DWA \cite{DWA} (blue), TERP \cite{weerakoon2021terp} (orange), OCRNet-based (pink), and PSPNet-based (purple) navigation schemes in unstructured terrain. We observe that TerraPN's trajectories navigate the robot on smooth surfaces (asphalt) as much as possible with high velocities and adapt the velocity to different surfaces based on their navig ability costs. Other methods directly drive the robot towards its goal with its maximum velocity. In some cases (PSPNet-based), the trajectories are wandering since the segmentation falters when the surface becomes too unstructured. 
      }}
      \label{fig:cover-image}
      \vspace{-15pt}
\end{figure}

To overcome the aforementioned limitations, navigability costs could be directly predicted using input and label data collected in the real world through regression using \textit{self-supervised learning} \cite{where-should-i-walk,model-error}. That is, an image (input) can be associated with data vectors collected through other sensors such as Inertial Measurement Units (IMU) and wheel encoders on the robot (labels) instead of a human-provided label/annotation. Such label vectors generated from real-world sensor data could lead to a more accurate characterization of the robot-terrain interaction instead of a human-perceived ground truth. Therefore, using the self-supervised learning paradigm could be used to quickly, and efficiently learn surface characteristics. 


Existing works that use self-supervised learning (online and offline) in the outdoor domain have predominantly focused on unstructured obstacles detection \cite{traverse-classific-unsupervised-online-visual-learning,kahn2020badgr}, roadway and horizon detection \cite{optical-flow-SS}, or long-range terrain classification into various categories \cite{long-range-vision,classifier-ensembles,forested-terrain-SS}. Therefore, there is a lack of online self-supervised learning methods that can characterize a terrain's surface properties. Many previous works also train their network offline by first collecting data before training on a powerful GPU \cite{where-should-i-walk,kahn2020badgr}. As a result, these methods can take a long time to re-train for a new kind of terrain. 



\textbf{Main Contributions:} We present TerraPN (Terrain cost Prediction and Navigation), a method that uses online self-supervised learning to predict a navigability cost map (or surface cost map) and uses it for efficient robot navigation in outdoor terrains. For training TerraPN's surface cost prediction network, a robot autonomously collects inputs (RGB image patches, robot velocities) and labels (IMU vibrations, and robot's odometry errors) by traversing various surfaces with different velocities. Our network learns the correlations between a surface's visual properties (color, texture, etc.) with its attributes such as traction, bumpiness, and deformability and encodes them in the form of surface costs. TerraPN does not depend on human-annotated datasets and requires minimal human supervision during data collection and training. Using the cost map, TerraPN adapts the Dynamic Window Approach (DWA) \cite{DWA}, a method that guarantees dynamically feasible robot trajectories, for outdoor navigation. The novel aspects of our approach include:

\begin{itemize}

\item A novel method that trains a neural network in an online self-supervised manner to learn a terrain's surface properties (texture, bumpiness, deformability, etc.) and computes a robot-specific 2D surface cost map. The predicted cost map is a concatenation of patches of costs corresponding to different input RGB patches. Our network trains in $\sim 20$ minutes for 5 different surfaces compared to segmentation methods that require 3-10 hours of offline training to achieve similar levels of performances suitable for navigation. Using patches of RGB images leads to a low input dimensional formulation for our network.

\item An algorithm to non-uniformly sample patches from the input image to obtain improved navigability cost representations at the boundaries between surfaces and maintain low inference times. Non-uniform sampling leads to using a lower number of patches per image to predict the costs for the surfaces in a scene compared to uniform sampling. This results in a reduction of 47.27\% in our method's inference time on a mobile commodity GPU.


\item A novel outdoor navigation algorithm that computes dynamically feasible collision-free trajectories by accounting for different surface costs. Our method adapts DWA's \cite{DWA} formulation by modulating the robot's acceleration limits for different surfaces and computing trajectories with lower surface costs compared to DWA's trajectories. This results in improved success rates (up to 35.84\%), vibration costs (up to 21.52\% lower), and mean velocity is high-cost surfaces (up to 46.76\% lower) for different outdoor scenarios. We implement our method on a Clearpath Husky robot and evaluate it in real-world unstructured scenarios with previously unseen surfaces.

\end{itemize}

We use a learning-based approach for the cost map prediction and a model-based (extending DWA \cite{DWA}) approach for navigation. Therefore, TerraPN combines the benefits of accurate characterization of robot-terrain interaction and guaranteed low surface cost, dynamically feasible navigation.
\section{Related Works}
In this section, we discuss previous works in computer vision for characterizing a terrain's traversability and methods for outdoor navigation.

\subsection{Characterizing Traversability}
Traditional vision works have used methods such as Markov Random fields \cite{prob-terrain-class} and triangular mesh reconstruction \cite{3d-mesh} on 3D point clouds to analyze surface roughness and traversability. Learning-based works for traversability prediction fall into a combination of supervised/unsupervised and classification/regression categories.

\subsubsection{Supervised Methods}
Works in pixel-wise semantic segmentation classify a terrain into multiple predefined classes such as traversable, non-traversable, forbidden, etc. \cite{guan2021ganav,ttm,geo-visual}. To this end, fusing a terrain's semantic and geometric features has also been studied \cite{geo-visual}. These works are typically supervised and utilize large hand-labeled datasets of images \cite{rugd, rellis} to train classifiers. However, manually annotating datasets is time- and labor-intensive, not scalable to large amounts of data, and may not be applicable for robots of different sizes, inertias, and dynamics \cite{anomaly-detection}. Such methods also assume that visually similar terrains have the same traversability \cite{traverse-classific-unsupervised-online-visual-learning} without considering the robot's dynamics, velocities, or other constraints.


\subsubsection{Self-supervised Methods}
Unsupervised learning-based methods overcome the need for such datasets by automating the labeling process by either collecting terrain-interaction data such as forces/torques \cite{where-should-i-walk}, contact vibrations \cite{rover-cmu}, acoustic data \cite{proprioceptive-sensor}, vertical acceleration experienced \cite{multi-sensor-correlation}, and stereo depth \cite{long-range-vision,classifier-ensembles} and associating them with visual features (RGB data) for self-supervision or reinforcement learning \cite{weerakoon2021terp}. Other works have correlated 3D elevation maps and egocentric RGB images \cite{trav-analysis-terrain-mapping} or overhead RGB images \cite{overhead-images} for classification. 


Few methods have performed self-supervised regression, where for each RGB pixel in outdoor terrain, the corresponding force-torque measurements \cite{where-should-i-walk}, trajectory model error \cite{model-error}, and resistive forces experienced \cite{pliable} were predicted post-training. 


\vspace{-5pt}
\subsection{Outdoor Navigation}
Early works in outdoor navigation proposed using the binary classification of obstacles versus free space \cite{Laubach} and potential fields \cite{Shimoda_potential} for outdoor collision avoidance. With the advent of deep learning, methods to estimate navigability/energy cost in uneven terrains through imitation learning \cite{silver2010learning,sriram-siva-1} using egocentric sensor data and a priori environmental information \cite{zakharov2020energy} have been proposed. Siva et al. \cite{siva2021robot} addressed navigational setbacks due to wheel slip and reduced tire pressure in outdoor terrains by learning compensatory behaviors.



BADGR \cite{kahn2020badgr} presents an end-to-end DRL-based navigation policy that learns the correlation between events (collisions, bumpiness, and change in position) and the actions performed by the robot. Since learning-based approaches cannot guarantee any optimality in terms of a navigation metric (minimal cost, path length, dynamic feasibility, etc.), \cite{weerakoon2021terp} proposed a hybrid model of spatial attention for perception and a dynamically feasible DRL method for navigation.
 
 
\section{Background and Problem Formulation}
In this section, we define the problem formulation and provide some background on DWA \cite{DWA}.

\begin{figure*}[t]
      \centering
      \includegraphics[width=15.3cm,height=3.5cm]{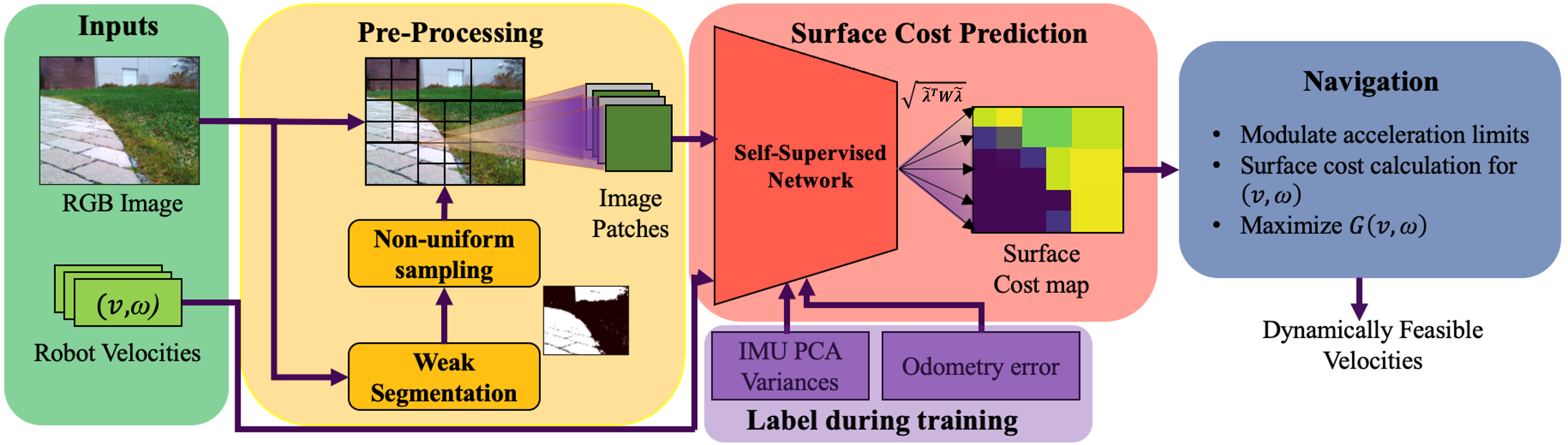}
      \caption {\small{TerraPN's overall architecture. Our method uses RGB images and a set of robot velocities to predict a surface cost map for the robot. In the pre-processing step, a classical image segmentation method (Weak segmentation) is used to differentiate regions with different pixel intensities. A non-uniform patch sampling scheme is then applied where large/small patches are extracted from regions that have the same/varied pixel intensities. The patches are passed into our self-supervised network (see Fig. \ref{fig:network-arch}) that predict a $4 \times 1$ label vector $\Tilde{\boldsymbol\lambda}$. We compute the surface cost as $\Tilde{\boldsymbol\lambda}$'s weighted norm, create patches with the cost values, and concatenate them based on their locations on the RGB image to form the cost map (blue implies low, yellow implies high costs). The cost map is then used by our novel navigation algorithm to compute dynamically feasible velocities with low surface costs in real-time. This velocity computation task is performed iteratively at every time step for a given image and velocity vector input pair.   
      }}
      \label{fig:system_architecture}
\end{figure*}

\subsection{Problem Formulation} \label{sec:prob-form}
The problem of navigating in outdoor environments with various surface properties can be divided into two stages: surface cost prediction, and low-cost navigation. To predict surface costs, we train a neural network that uses RGB image patches $I^{n \times n}_{RGB}$, a set of the robot's linear and angular velocities ($v, \omega$) as inputs, and a $4 \times 1$ label vector $\boldsymbol\lambda$ consisting of two dimensions corresponding to IMU measurements, and two to robot's odometry errors. The IMU measurements and odometry errors characterize a surface's traction, bumpiness, and deformability (see Section \ref{sec:comp-input-labels}). After training, we use the neural network's predicted label $\Tilde{\boldsymbol\lambda}$ to construct a 2D surface cost map. The cost map is a non-uniformly discretized concatenation of patches that depend on the surface boundaries in the input image.




Next, the cost map is used by the navigation component to compute dynamically feasible trajectories with low surface costs. We use \textit{i, j} for denoting various indices, $x, y$ to denote positions relative to different coordinate frames. Our method's overall architecture is shown in \ref{fig:system_architecture}.



\subsection{Dynamic Window Approach}
DWA is a model-based collision avoidance algorithm that guarantees dynamically feasible robot velocities. However, its formulation implicitly assumes that the robot traverses on a uniform, smooth navigable surface. TerraPN's navigation component modifies and extends DWA's formulation for outdoor navigation by accounting for surface costs. 

DWA's formulation involves two stages: 1. computing a collision-free, dynamically feasible velocity search space, and 2. choosing a velocity in the search space to maximize an objective function. 

In the first stage, all possible velocities in $V_s = [v \in [0, v_{max}], \omega \in [-\omega_{max}, \omega_{max}]$, respectively, are considered for the search space $V_s$. Here, $v_{max}, \omega_{max}$ represent the robot's maximum achievable linear and angular velocities respectively. Next, all the $(v, \omega)$ pairs that prevent a collision in $V_s$ are used to form the admissible velocity set $V_a$. Lastly, the velocity pairs that are reachable, accounting for the robot's acceleration limits $\Dot{v}_{max}, \Dot{\omega}_{max}$ within a short time interval $\Delta t$, are considered to construct a dynamic window set $V_d$. The resulting search space is constructed as $V_r = V_s \cap V_a \cap V_d$. 

In the second stage, DWA searches for $(v, \omega) \in V_r$, that maximizes the following objective function. 

\vspace{-4pt}
\begin{equation}
    G_1(v, \omega) =  \alpha.head + \beta.dist +  \gamma.vel
    \label{eqn:dwa-obj-func}
\end{equation}

Here, \textit{head()} measures the progress towards the robot's goal, \textit{dist()} is the distance to the closest obstacle when executing a certain $(v, \omega)$, and \textit{vel()} measures the forward velocity of the robot and encourages higher velocities. $v, \omega$ on the RHS are omitted for clarity.
\section{TerraPN: Surface Cost Prediction}
In this section, we describe the components in computing a terrain's surface cost map: 1. data collection, 2. network architecture and training, 3. cost map generation using variable sampling.

\subsection{Autonomous Data Collection}
To generate the input and label data for training the cost map prediction network (Fig. \ref{fig:network-arch}), we collect the raw sensor data from an RGB camera, robot's odometry, 6-DOF IMU, and 3D lidar autonomously on different surfaces. The robot performs a set of maneuvers in two different speed ranges: 1. slow ([0, $\frac{v_{max}}{2}$] m/s and [-$\frac{\omega_{max}}{2}$, $\frac{\omega_{max}}{2}$] rad/s), and 2. fast ([0, $v_{max}$] m/s and [$-\omega_{max}$, $\omega_{max}$] rad/s). The maneuvers include: 1. moving along a rectangular path, 2. moving in a serpentine trajectory, and 3. random motion. 

The maneuvers are designed to cover all the $[v, \omega]$ pairs within the robot's velocity limits to emulate different terrain interactions while the data is collected. If the robot encounters an obstacle, it switches from executing the maneuvers to avoiding a collision using DWA \cite{DWA}. 

\subsection{Computing Inputs and Labels} \label{sec:comp-input-labels}
As mentioned in section \ref{sec:prob-form}, our network's uses an $n \times n$ patch ($I_{RGB}^{n \times n}$) that is cropped from the center bottom of the collected full-sized image of size $w \times h$ ($I_{RGB}^{w \times h}$). To generate the velocity input, the linear and angular velocities for the past $n/2$ instances from the robot's odometry are obtained and reshaped to a $2 \times n/2$ vector. The velocity vector's dimensions are chosen such that the image input does not dominate the network's predictions.

Our label vector consists of IMU and odometry error components. They are robot-specific and implicitly encode the robot's dynamics, inertia, etc., and its interactions with the terrain. To generate the IMU component, we apply Principal Component Analysis (PCA) to reduce the dimensions of the collected 6-dimensional IMU data (linear accelerations and angular velocities) to its two principal components. From Fig. \ref{fig:pca}, we observe that the variances ($\sigma_{PC1}$ and $\sigma_{PC2}$) of the data along the principal components help differentiate various surfaces in terms of their bumpiness. Additionally, for the same surface, higher velocities lead to higher variances in the data (justifying the need to consider velocities as inputs). 
\begin{figure}[t]
      \centering
      \includegraphics[width=7cm,height=2.0cm]{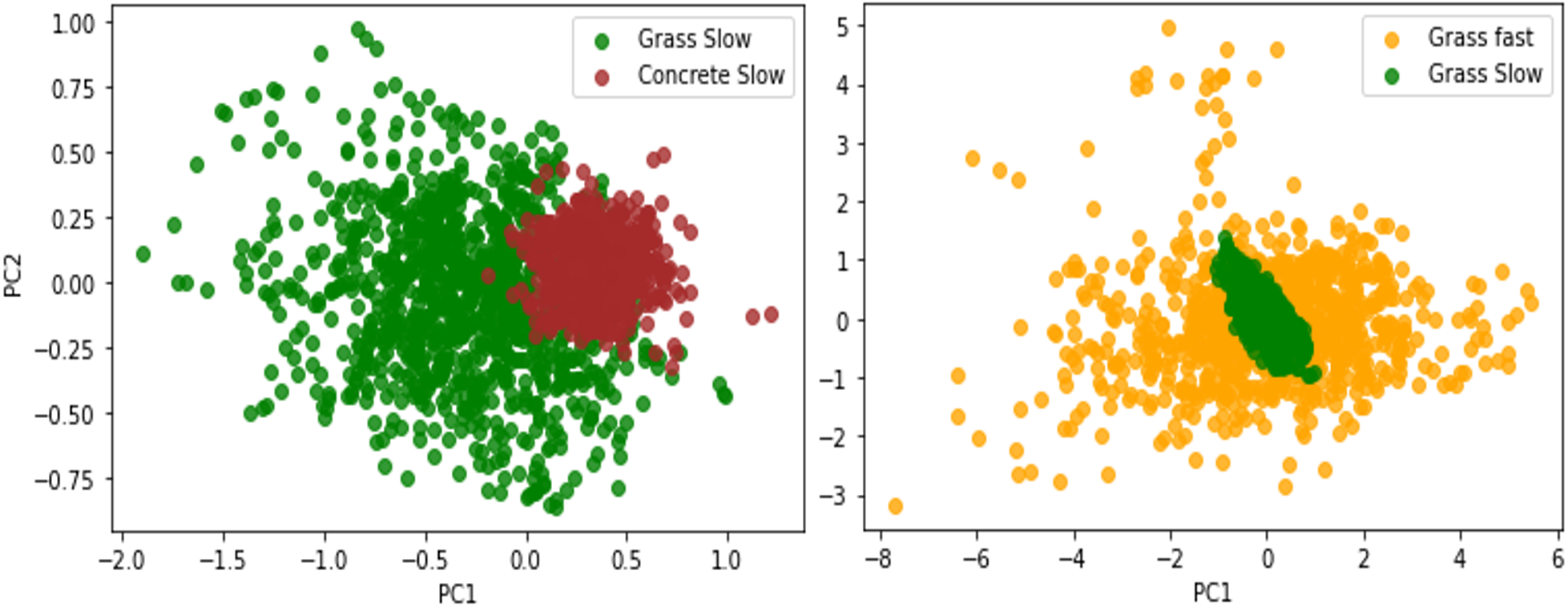}
      \caption {\small{Sample results of the PCA applied on the 6-dimensional IMU data. 
      The variances of the data along the two principal axes for \textbf{[Left]} two different surfaces concrete (red), and grass (green), and \textbf{[Right]} two different velocity ranges for the same surface. We observe that the variances can differentiate various surfaces and speed levels.
      }}
      \label{fig:pca}
\end{figure}

To generate the odometry error component of the label vector, the distance traveled by the robot ($\Delta d_{odom}$) and its change in orientation ($\Delta\theta_{odom}$) in a time interval $\Delta t$ are obtained from the robot's odometry. We obtain the same data ($\Delta d_{loam}$, $\Delta\theta_{loam}$) from a 3D lidar-based odometry and mapping system \cite{legoloam}. The distance and orientation change errors are calculated as,
\vspace{-8pt}
\begin{gather}
    d_{error} = \Delta d_{loam} - \Delta d_{odom} \\ 
    \theta_{error} = \Delta \theta_{loam} - \Delta \theta_{odom}.
\end{gather}
\no This component of the label vector differentiates surfaces with high deformability or poor traction where, if the robot's wheels get stuck or slip (e.g. in mud), $\Delta d_{loam} \approx 0$ and $\Delta \theta_{loam} \approx 0$, whereas $\Delta d_{odom}$ and $\Delta \theta_{odom}$ would have high values. The final label vector is given by,
\vspace{-8pt}
\begin{equation}
    \boldsymbol\lambda = [\sigma_{PC1} \quad \sigma_{PC2} \quad d_{error}  \quad \theta_{error}]^\top.
    \label{eqn:label-vector}
\end{equation}

\subsection{Network Architecture and Online Training}
The network (see Fig. \ref{fig:network-arch}) is trained to predict the $4 \times 1$ vector in equation \ref{eqn:label-vector} given the image and velocity inputs. The architecture uses a series of 2D convolution with skip connections and batch normalization on the image input, and several layers of fully connected layers with dropout and batch normalization for the velocity input, shown in Fig. \ref{fig:network-arch}. In the image stream, after the initial convolution operation with ReLU activation, the image is connected to four residual blocks and one linear layer with batch normalization. Since we have limited data when collecting labels and training the network online, we use the residual connection to ensure that the network would not overfit the collected data given many layers and parameters in the network. Dropout and batch normalization layers also improve generalization capabilities and avoid overfitting. 

\begin{figure}[t]
      \centering
      \includegraphics[width=\columnwidth]{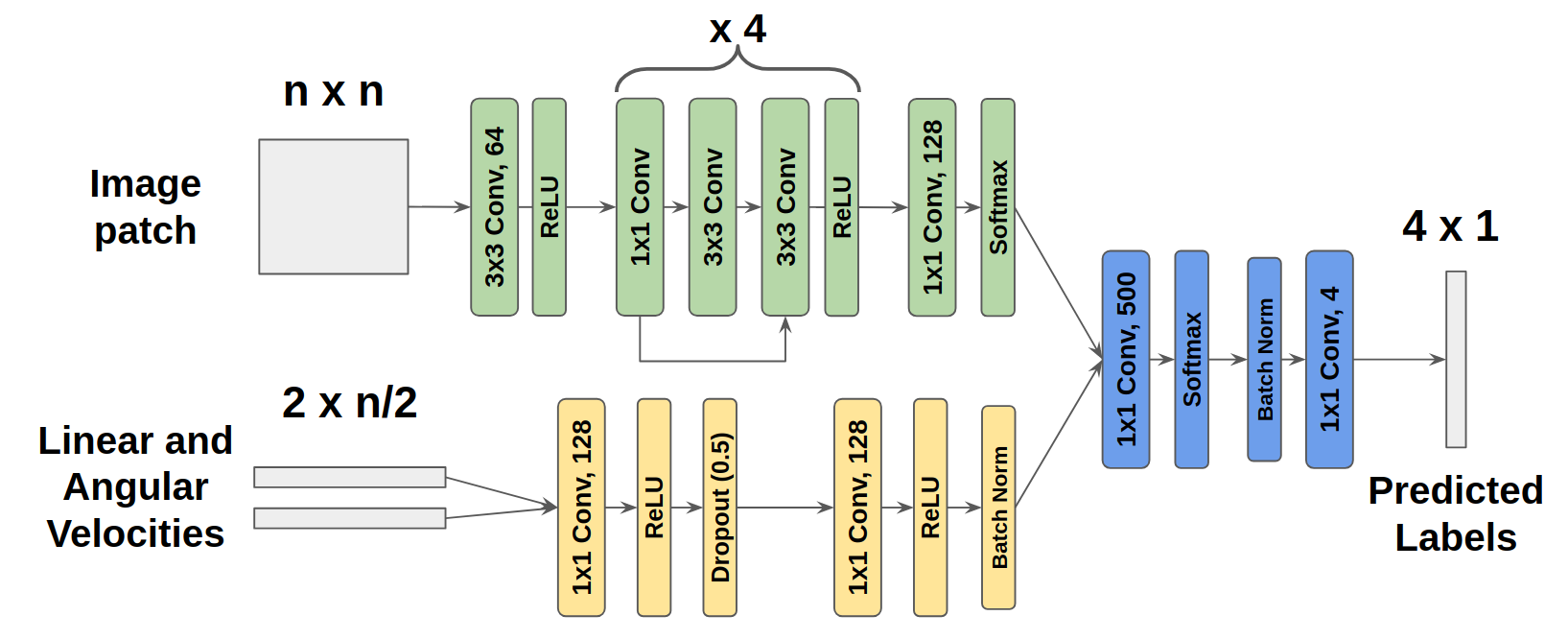}
      \caption {\small{Our novel two-stream network architecture. One stream encodes image patches (green) and the other stream processes the linear and angular velocities (yellow). The feature embeddings from each stream are concatenated and finally passed into a set of linear layers (blue), and the final predictions are the estimated IMU readings and the odometry errors associated with this image patch. }}
      \label{fig:network-arch}
\end{figure}

As the robot autonomously collects sensor data on different surfaces, the inputs and labels are generated and shuffled. The network's training is started and performed online once sufficiently varied data is collected ($\sim 6$ minutes). The data collection and online training is normally completed in $20-25$ minutes.

\subsection{Navigability Cost Calculation}
Based on the vector predicted by the network ($\Tilde{\boldsymbol\lambda} = [\Tilde{\sigma}_{PC1} \quad \Tilde{\sigma}_{PC2} \quad \Tilde{d}_{error}  \quad \Tilde{\theta}_{error}]^\top$), the navigability cost for a given RGB patch and velocity vector is computed as the weighted norm of $\Tilde{\boldsymbol\lambda}$,

\vspace{-10pt}
\begin{equation}
    c = \sqrt{\Tilde{\boldsymbol\lambda}^\top \mathbf{W} \Tilde{\boldsymbol\lambda}}
\end{equation}

Here, $\mathbf{W}$ is a diagonal matrix with positive weights. To make navigability cost predictions on a full-sized RGB image $I^{w \times h}_{RGB}$, the image is first resized into new dimensions as follows,
\vspace{-10pt}
\begin{equation}
    w' = \nint{w/n} \cdot n, \quad h' = \nint{h/n} \cdot n,
\end{equation}

\no where $\nint{}$ is the nearest integer operator. Next, non-overlapping $n \times n$ patches are cropped along the width and height of the image. This $\nint{w/n} \cdot \nint{h/n}$ batch of images is passed as inputs along with a batch containing the input velocity vectors to obtain the navigability cost predictions $c_{ij}$ $(i \in [1, \nint{w/n}]$ and $j \in [1, \nint{h/n}])$ for different regions of the resized image $I^{w' \times h'}_{RGB}$. The predicted costs are normalized to be in the range $[0, \pi/2]$. 

Finally, the surface navigability cost map $C^{w' \times h'}$ is constructed by vertically and horizontally concatenating $n \times n$ patches with the values of $c_{ij}$ corresponding to different regions. $C^{w' \times h'}$ is then resized back to $C^{w \times h}$.

\subsection{Non-uniform Patch Sampling}
Although using patches reduces the input dimensionality of our network and helps train it faster, it could result in regions with multiple surfaces (boundaries) in the full-sized image having inaccurate costs depending on the patch size. Therefore, we employ a non-uniform patch sampling technique to obtain finer patches in multi-surface regions (boundaries). Conversely, portions with a single surface can be sampled with larger patch sizes $2n, 4n$. This reduces the total number of patches used for cost prediction on the full-sized image, thus maintaining the method's inference rate. 


\subsubsection{Weak Segmentation} 
To detect regions with multiple surfaces and differentiate them in a scene, we use a \textit{weak} segmentor which is based on classical image segmentation methods. The weak segmentation method is computationally light (inference time of $\sim 66 \mu s$) and may not have pixel-level precision. But, it sufficiently demarcates the boundaries of various terrains in the input image. First, the Sobel edge detector is applied to the grayscale input image $I^{w' \times h'}_{gray}$ of the scene, and the histogram of the result is computed. Next, based on the Bayesian information criterion \cite{schwarz1978estimating} a Gaussian mixture model is fit to the histogram, and the mean of each Gaussian curve is used as a marker/threshold ($\mu_1, \mu_2, ... , \mu_k$) to differentiate the regions of pixels with different intensity levels in the image. Finally, the watershed filter \cite{watershed} is applied to highlight the regions of different intensities to obtain $I^{w \times h}_{WS}$ (See figure \ref{fig:system_architecture}). 

\subsubsection{Selecting Sampling Patch Size} 
We consider the patches $I^{n \times n}_{WS}, I^{2n \times 2n}_{WS}, I^{4n \times 4n}_{WS}$ in the weak segmentation output $I^{w' \times h'}_{WS}$ and the intensity of pixels within them. If a patch larger than $n \times n$ satisfies the condition in \ref{eqn:patch-division}, smaller patches are not considered for cost prediction.
\vspace{-4pt}
\begin{equation}
    \frac{num(\mu_i)}{n^2} > \xi \qquad i \in [1, k].
    \label{eqn:patch-division}
\end{equation}

where $num(\mu_i)$ is the number of pixels with intensity greater than or equal to $\mu_i$, and $\xi$ is a threshold. This condition ensures that when a large patch has a significant number of pixels with the same intensity, implying the presence of a single surface, smaller patches are not used for cost prediction. The larger patch is resized to $n \times n$ before passing into the network. 
\section{TerraPN: Navigation}
In this section, we explain how the computed surface cost map is used by TerraPN's navigation to adapt DWA for outdoor navigation and compute dynamically feasible, low surface cost velocities.

\subsection{Trajectory Navigability Cost}
To adapt DWA to outdoor terrains, the trajectory corresponding to a $(v, \omega)$ pair must be associated with a surface cost. The trajectory for a given $(v, \omega)$ pair relative to a coordinate frame attached to the robot's center (X-axis pointing forward, Y-axis pointing left) is calculated as,
\vspace{-5pt}
\begin{equation}
    \begin{split}
        x^{rob}_i &= v\cos{(\omega t_i)}t_i \\
        y^{rob}_i &= v\sin{(\omega t_i)}t_i \\ 
        t_i &= t_0 + i \Delta t, \qquad i \in [0, s_{num}].
    \end{split}
    \label{eqn:traj-compute}
\end{equation}
Here, $t_0$ is the initial time instant and $s_{num}$ is the number of time steps used for extrapolating the trajectory. This trajectory is then transformed relative to the camera frame attached to the robot using a homogeneous transformation matrix as $[x^{cam}_i \,\, y^{cam}_i]^\top = H^{cam}_{rob} \cdot [x^{rob}_i \,\, y^{rob}_i]^\top$. Next, the trajectory is converted to correspond to the image/pixel coordinates of $I^{w' \times h'}_{RGB}$, i.e., $[x^{img}_i \,\, y^{img}_i]^\top$ using the camera's intrinsic parameters. The navigability cost for a velocity pair can be then computed from $C^{w' \times h'}$ as, 
\vspace{-10pt}
\begin{equation}
    sur(v, \omega) = \sum_{i=0}^{s_{num}} cost(x^{img}_i, y^{img}_i).
\end{equation}

\no Here, \textit{cost()} is the surface cost at a given pixel's coordinates.

\subsection{Variable Acceleration Limits}
Robot navigation methods consider a constant range of linear ($[-\Dot{v}_{max}, \Dot{v}_{max}]$) and angular ($[-\Dot{\omega}_{max}, \Dot{\omega}_{max}]$) accelerations. Our formulation varies the linear and angular acceleration limits available for planning depending on the properties of the surface on which the robot is traversing such that $\Dot{v} \in [-\Dot{v}_{max}, \Dot{v}_{lim}]$ and $\Dot{\omega} \in [-\Dot{\omega}_{lim}, \Dot{\omega}_{lim}]$.

This is done because, intuitively, the robot accelerating on a smooth surface (e.g., concrete, asphalt) would lead to a low navigability cost. Therefore, the robot can proceed towards its goal faster. Whereas on a bumpy surface or one with poor traction, (e.g., tiled surface, dry leaves), accelerating would lead to high vibrations and the risk of getting stuck (e.g. mud). We do not limit the maximum deceleration available to the robot since it may have to slow down to avoid obstacles or while moving on a rough surface.

First, we divide the trajectory corresponding to the robot's current $(v, \omega)$ and  calculate the cost for the second half as follows,
\vspace{-10pt}
\begin{equation}
    \mathcal{C}_{\frac{s_{num}}{2}+1:s_{num}} = \sum_{i=s_{num}/2 + 1}^{s_{num}} cost(x_i^{img}, y_i^{img}) / (s_{num}/2).
    \label{eqn:traj-sur-cost}
\end{equation}

\no We limit the robot's accelerations using this navigability cost as,
\vspace{-15pt}
\begin{gather}
    \Dot{v}_{lim} = \tau \cdot \Dot{v}_{max}, \\
    \Dot{\omega}_{lim} = \tau \cdot \Dot{\omega}_{max}, \\
    \tau = \cos{(\mathcal{C}_{\frac{s_{num}}{2}:s_{num}})}, \quad \mathcal{C}_{\frac{s_{num}}{2}:s_{num}} \in [0, \pi/2].
\end{gather}

\no If $\mathcal{C}_{\frac{s_{num}}{2}+1:s_{num}}$ is low (low-cost surface), the robot is allowed to accelerate towards its goal, while a high $\mathcal{C}_{\frac{s_{num}}{2}+1:s_{num}}$ restricts the robot from speeding up. Considering only the second half of the trajectory also implicitly accounts for transitions between surfaces. Using these acceleration limits, a new dynamic window $V'_d$ is constructed. The new resultant search space is calculated as $V'_r = V_s \cap V_a \cap V'_d$.






\vspace{-5pt}
\subsection{Optimization}
Finally, a $(v, \omega) \in V'_r$  which maximizes the objective function
\vspace{-10pt}
\begin{multline}
    G_2 (v, \omega) =  \alpha.head + \beta.dist +  \gamma.vel - \delta.sur,
    \label{eqn:our-obj-func}
\end{multline}

\no is chosen. Here, $\alpha, \beta, \gamma, \delta$ are weights for each component.

\begin{lemma}
TerraPN's navigation computes collision-free, dynamically feasible trajectories with surface costs that are always lesser than or equal to the  DWA's trajectory's surface cost. 
\end{lemma}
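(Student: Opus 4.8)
The plan is to prove the three claimed properties in turn, treating collision-freeness and dynamic feasibility as immediate consequences of the construction of $V'_r$, and reserving the surface-cost comparison as the substantive part. First I would observe that by construction $V'_r = V_s \cap V_a \cap V'_d$, so every velocity TerraPN selects lies in the admissible set $V_a$ and is therefore collision-free, exactly as in standard DWA. For dynamic feasibility I would note that since the second-half trajectory cost $\mathcal{C}$ lies in $[0,\pi/2]$, the scaling factor $\tau = \cos(\mathcal{C}) \in [0,1]$, whence $\dot{v}_{lim} = \tau\,\dot{v}_{max} \leq \dot{v}_{max}$ and $\dot{\omega}_{lim} = \tau\,\dot{\omega}_{max} \leq \dot{\omega}_{max}$. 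The reduced acceleration intervals are thus contained in the original ones, so the velocities reachable within $\Delta t$ form a subset, giving $V'_d \subseteq V_d$ and hence $V'_r \subseteq V_r$. Because $V'_d$ is built from genuine (if tighter) acceleration bounds, every velocity in it is reachable in $\Delta t$ and therefore dynamically feasible.

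The core of the argument is the surface-cost inequality, which I would establish by an exchange argument on the two objective functions. Writing $G_2 = G_1 - \delta \cdot sur$ with $\delta > 0$, let $(v_a,\omega_a)$ denote the velocity chosen by DWA (the maximizer of $G_1$) and $(v_b,\omega_b)$ the velocity chosen by TerraPN (the maximizer of $G_2$) over the same feasible set. Optimality supplies two inequalities: $G_1(v_a,\omega_a) \geq G_1(v_b,\omega_b)$ and $G_2(v_b,\omega_b) \geq G_2(v_a,\omega_a)$. Substituting $G_2 = G_1 - \delta \cdot sur$ into the second and rearranging yields $\delta\,(sur(v_a,\omega_a) - sur(v_b,\omega_b)) \geq G_1(v_a,\omega_a) - G_1(v_b,\omega_b) \geq 0$, and dividing by $\delta > 0$ gives $sur(v_b,\omega_b) \leq sur(v_a,\omega_a)$, which is precisely the claimed bound.

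The main obstacle I anticipate is reconciling the two search spaces, since DWA optimizes $G_1$ over $V_r$ while TerraPN optimizes $G_2$ over the strictly smaller $V'_r$, whereas the clean exchange argument above presumes both optima are compared over a common feasible set. The easy case is when DWA's maximizer already lies in $V'_r$: then both $(v_a,\omega_a)$ and $(v_b,\omega_b)$ belong to $V'_r$ and the two optimality inequalities hold verbatim. The delicate case is when $(v_a,\omega_a) \in V_r \setminus V'_r$, i.e.\ DWA would pick an aggressive acceleration that the cost-based limiting excludes; here I would make the comparison against DWA restricted to the shared search space $V'_r$, equivalently stating the surface-cost guarantee relative to the dynamically feasible set both planners actually occupy. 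I would make this restriction explicit at the outset so the exchange inequality applies directly, and I expect this framing — rather than any computation — to be where the proof must be stated most carefully.
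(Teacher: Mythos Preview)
Your proposal is correct and follows essentially the same exchange argument as the paper: write $G_2 = G_1 - \delta\,sur$, combine the two optimality inequalities to obtain $sur(v_2,\omega_2) \le sur(v_1,\omega_1)$, and derive dynamic feasibility from $\tau \in [0,1]$ so that $[-\dot v_{max},\dot v_{lim}] \subseteq [-\dot v_{max},\dot v_{max}]$. The search-space subtlety you flag is resolved in the paper exactly as you suggest---both maximizers are taken over the common set $V'_r$---so your explicit framing of this restriction anticipates what the paper does without comment.
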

\begin{proof}
Let $(v_1, \omega_1) = argmax \, (G_1)$, and $(v_2, \omega_2) = argmax \, (G_2)$, $\forall v, \omega \in V'_{r}$. From equations \ref{eqn:dwa-obj-func} and \ref{eqn:our-obj-func}, we get $G_2(v,\omega) = G_1(v,\omega) - \delta sur(v,\omega)$. We know that, $G_2(v_2, \omega_2) \ge G_2(v_1, \omega_1)$  $\implies G_1(v_2, \omega_2) - \delta sur(v_2, \omega_2) \ge G_1(v_1, \omega_1) - \delta sur(v_1, \omega_1)$. Rearranging the terms, we get $G_1(v_1, \omega_1) - G_1(v_2, \omega_2) \le \delta(sur(v_1, \omega_1) - sur(v_2, \omega_2))$.

 
Since $v_1, \omega_1$ maximizes $G_1$, in the LHS, $G_1(v_1, \omega_1) - G_1(v_2, \omega_2) \ge 0 \implies sur(v_1, \omega_1) - sur(v_2, \omega_2) \ge 0$. This inequality holds since $sur(v, \omega) \ge 0$. The dynamically feasibility of TerraPN's velocities follows from the fact that acceleration limits obey $[-\Dot{v}_{max}, \Dot{v}_{lim}] \subseteq [-\Dot{v}_{max}, \Dot{v}_{max}]$ as $\tau \in [0, 1]$. 
\end{proof}

\subsection{Algorithm}
\no The algorithm for TerraPN's outdoor navigation is shown in Algorithm \ref{alg:Terrapn-Nav}. 

\begin{algorithm}
	\caption{TerraPN Navigation}
	\hspace*{\algorithmicindent} \textbf{Input} obs, goal, $\dot{v}_\text{max}$, $\dot{\omega}_\text{max}$, $v_\text{curr}$, $\omega_\text{curr}$, $s_\text{num}$\\
    \hspace*{\algorithmicindent} \textbf{Output} $v^*$, $\omega^*$
	\begin{algorithmic}[1]
	    \State $V_s = [[0,v_\text{max}],[-\omega_\text{max}, \omega_\text{max}]]$
	    \State $V_A = $ CollisionFree$(V_s, obs)$
	    \State $\text{traj}_\text{curr} = \text{ComputeTrajectory}(v_\text{curr}, \omega_\text{curr}, s_\text{num})$
	    \State $C_{\frac{s_\text{num}}{2}+1:s_\text{num}} = \text{SurfaceCost}(\text{traj}_\text{curr}, s_\text{num})$
	    \State $\tau = \cos(C_{\frac{s_\text{num}}{2}+1:s_\text{num}})$
	    \State $\dot{v}_\text{lim}, \dot{\omega}_\text{lim} = \tau \cdot \dot{v}_\text{max}, \tau \cdot \dot{\omega}_\text{max}$
	    \State $V_d' = [[v_\text{curr} - \dot{v}_\text{max} \Delta t, v_\text{curr} + \dot{v}_\text{lim} \Delta t], [\omega_\text{curr} - \dot{\omega}_\text{lim} \Delta t, \omega_\text{curr} + \dot{\omega}_\text{lim} \Delta t]]$
	    \State $V'_r = V_s \cap V_a \cap V'_d$
	    \State $G_2 = \alpha.head(V'_r, \text{goal}) + \beta.dist(V'_r, \text{obs}) +  \gamma.vel(V'_r) - \delta.sur(V'_r)$
	    \State $v^*, \omega^* = \underset{v,\omega \in V_r'}{\text{argmax}}(G_2)$
	\end{algorithmic} 
	\label{alg:Terrapn-Nav}
\end{algorithm}

Here, obs is the set of obstacles in the environment characterized by lidar scans. In line 2, CollisionFree() is a function which returns the velocities which avoid a collision with the obstacles in the environment. The ComputeTrajectory() function in line 3 implements equation \ref{eqn:traj-compute}, and the SurfaceCost() in line 4 implements equation \ref{eqn:traj-sur-cost}.   
\section{Results and Evaluations}
We detail our method's implementation, our evaluation metrics, the different environments we tested in, and compare with other methods in this section.

\subsection{Implementation}
Our self-supervised learning network is implemented using Tensorflow. A Clearpath Husky robot mounted with a VLP-16 lidar and Intel Realsense camera is used to evaluate our method in real-world scenarios. The lidar is only used for the initial lidar-based odometry data collection and to provide 2D scans for detecting obstacles for navigation. For processing, the robot is equipped with a laptop with an Intel i9 processor and Nvidia RTX2080 GPU. 

In our formulation, we use n = 50, w = 640, h = 480, $\xi = 0.8$, ${s_{num}} = 15$, $\Delta t = 0.1$, $\alpha = 2.4, \beta = 3.2, \gamma = 0.1, \delta = 50$. The velocity and acceleration limits are set according to the Husky's limits. We train our cost map prediction on 5 surfaces: 1. concrete, 2. tiles, 3. grass, 4. asphalt, and 5. fallen yellow leaves. Each surface offer different levels of traction, bumpiness and deformability.

\subsection{Navigation Evaluation Metrics}
\no We use the following metrics for our comparisons. \\
\no \textbf{Success Rate} - The number of times the robot reached its goal while avoiding getting stuck or colliding over the total number of trials.

\no \textbf{Normalized Trajectory length} - The robot's trajectory length normalized over the straight-line distance to the goal for all the successful trials. 


\no \textbf{Vibration Cost} - The summation of the vertical motion's gradient experienced by the robot along its trajectory.

\no \textbf{Mean Velocity} - The robot's average velocity along its trajectory as it traverses various surfaces.

We compare our method with DWA \cite{DWA}, and TERP \cite{weerakoon2021terp} which navigates the robot based on the elevation maps in the environment. We also compare with two methods that use OCRNet \cite{ocrnet}, and PSPNet \cite{pspnet} for semantic segmentation and the Dijkstra's algorithm for waypoint computation and navigation. OCRNet and PSPNet classify different surfaces into several discrete classes based on their degree of navigability. Different cost values are associated with each class and used for waypoint computation using Dijkstra's algorithm.  The OCRNet and PSPNet networks were trained on the RELLIS-3D outdoor dataset which contains all the surfaces TerraPN's cost prediction network is trained for.

\subsection{Testing Scenarios}

\begin{figure*}[t]
      \centering
      \includegraphics[width=\linewidth]{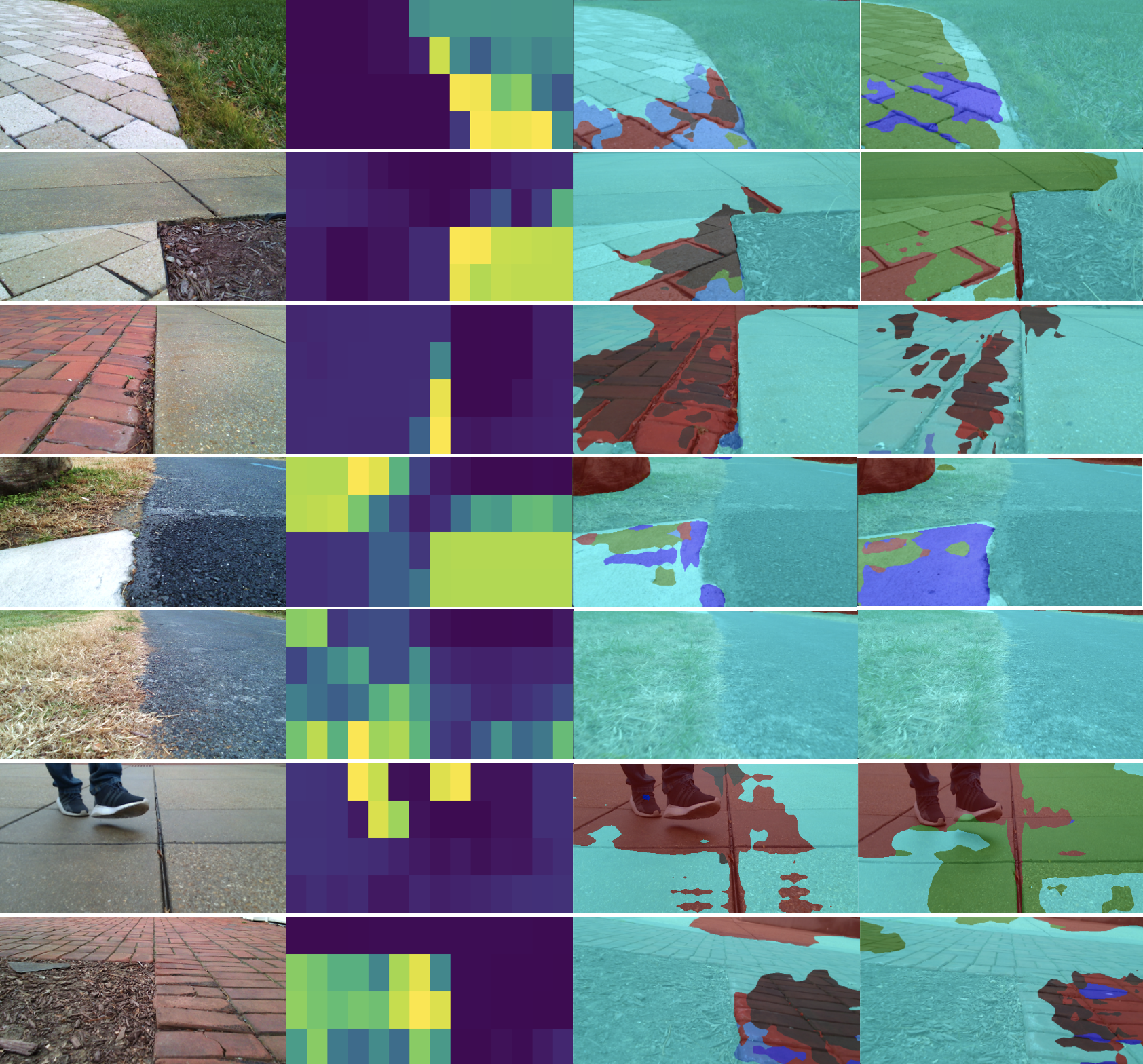}
      \caption {\small{Surface cost map predictions along with OCRNet's (third column) and PSPNet's (fourth column) segmentation results for different scenarios with 2-3 surfaces. TerraPN's cost prediction consider non-uniformly sampled patches on the RGB images and feed them into our network along with the robot's velocities. We observe that surfaces are differentiated well based on their navigability (dark blue denotes low cost and therefore better navigable surface and lighter, yellow shades denote high cost). We observe that our cost predictions are fairly accurate even for regions not seen during training (person's legs, soil, boundaries between surfaces). OCRNet and PSPNet mostly classify surfaces as traversable (green), or moderately traversable (cyan). However, they often incorrectly segment some patches as forbidden (red) or as obstacles (blue) which reflects poorly during navigation.
      }}
      \label{fig:cost-map-predict}
      \vspace{-5pt}
\end{figure*}

We evaluate and compare TerraPN's navigation with prior methods in five scenarios. We characterize each scenario's difficulty based on the number of surfaces, the level of unstructuredness, and whether they were previously seen during TerraPN's cost prediction training.  \\
\no \textbf{Scenario 1:} Two trained surfaces (concrete and grass). See Fig. \ref{fig:traj_comparison}a.

\no \textbf{Scenario 2:} Three surfaces (concrete, asphalt and rocks) with one untrained surface (rocks). See Fig. \ref{fig:traj_comparison}b.

\no  \textbf{Scenario 3:} Four surfaces (tiles, concrete, mud, grass) with one untrained surface (mud) where the robot could get stuck. See Fig. \ref{fig:traj_comparison}c.

\no \textbf{Scenario 4:} Four surfaces (asphalt, rocks, discolored grass, unstructured dry brown leaves with mud). The grass and dry leaves surfaces had undergone considerable seasonal changes. See Fig. \ref{fig:cover-image}.

\no \textbf{Scenario 5:} Untrained, highly unstructured surface with dry brown leaves, broken branches, mud, etc. See Fig. \ref{fig:traj_comparison}d.

\subsection{Analysis and Comparisons}
\begin{figure*}[t]
      \centering
      \includegraphics[width=17cm,height=3.8cm]{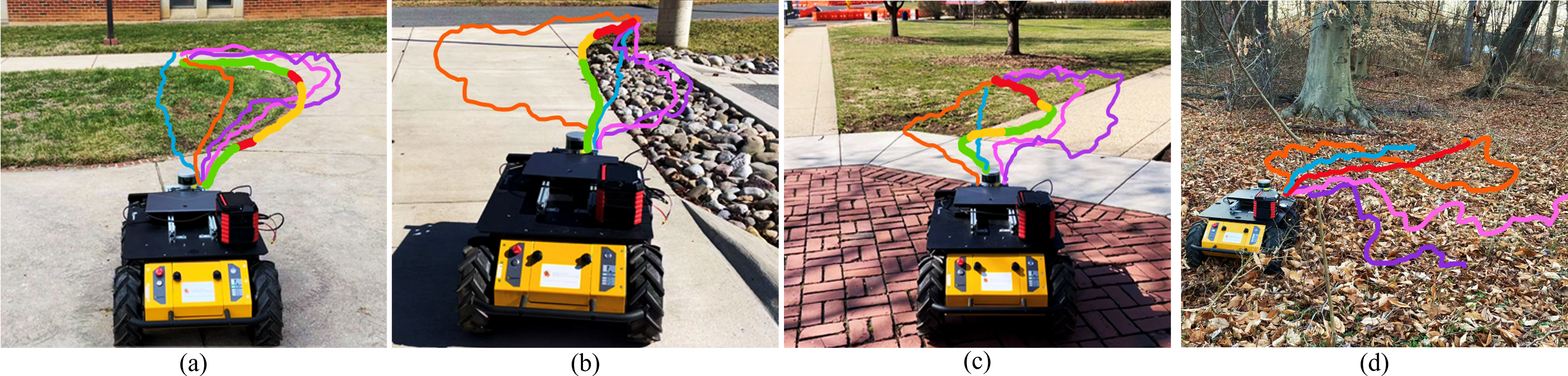}
      \caption {\small{ Robot trajectories when navigating in different unstructured terrains using our method, TerraPN (green, yellow, red corresponding to different speed levels), DWA \cite{DWA} (blue), TERP \cite{weerakoon2021terp} (orange), OCRNet-based (pink), PSPNet-based (purple) navigation schemes. (a) Scenario 1; (b) Scenario 2; (c) Scenario 3; (d) Scenario 5. We observe that TerraPN generates relatively shorter or comparable trajectories which navigate the robot on low-cost surfaces (concrete, asphalt, etc). Further, TerraPN varies the velocity appropriately when the robot encounters a relatively rough surface such as grass, rocks, dry leaves, or mud, while other methods always navigate the robot at its maximum speed on all surfaces. This leads to high vibrations and odometry errors in the robot. In (d), TerraPN moves slowly (red) throughout its progress towards its goal. TERP takes a much longer trajectory to the goal based on elevation changes. All other methods fail to reach the goal either due to incorrect segmentation (OCRNet, PSPNet) or high odometry errors (DWA).
      }}
      \label{fig:traj_comparison}
      \vspace{-10pt}
\end{figure*}

\begin{table}
\centering
\resizebox{\columnwidth}{!}{%
\begin{tabular}{|c|c|c|c|c|c|} 
\hline
\textbf{Metrics} & \textbf{Method} & \multicolumn{1}{|p{1cm}|}{\centering \textbf{Scenario} \\ \textbf{1}} & \multicolumn{1}{|p{1cm}|}{\centering \textbf{Scenario} \\ \textbf{2}} & \multicolumn{1}{|p{1cm}|}{\centering \textbf{Scenario} \\ \textbf{3}} & \multicolumn{1}{|p{1cm}|}{\centering \textbf{Scenario} \\ \textbf{4}}\\ [0.5ex] 
\hline
\multirow{5}{*}{\rotatebox[origin=c]{0}{\makecell{\textbf{Success}\\\textbf{Rate (\%)} \\ \textbf{(Higher is better)}}}} 
 & DWA \cite{DWA} & 100 & 70 & 79 & 53   \\
 & TERP \cite{weerakoon2021terp} & 100 & 77 & 78 & 61 \\
 & OCRNet \cite{ocrnet} & 94 & 72 & 73 & 58 \\
 & PSPNet \cite{pspnet} & 92 & 75 & 72 & 56 \\
 & TerraPN &  \textbf{100} &  \textbf{88} &  \textbf{85} &  \textbf{72} \\
\hline

\multirow{5}{*}{\rotatebox[origin=c]{0}{\makecell{\textbf{Norm.}\\\textbf{Traj.}\\\textbf{Length} \\ \textbf{(Close to 1 is better)}}}} 
 & DWA \cite{DWA} & 0.965 & 1.001 & 1.076  & 1.141   \\
 & TERP \cite{weerakoon2021terp} & 0.996 & 1.271 & 1.158 & 1.229 \\
 & OCRNet \cite{ocrnet} & 1.389 & 1.084 & 1.403  & 1.287 \\
 & PSPNet \cite{pspnet} & 1.241 & 1.066 & 1.524  & 1.369 \\
 & TerraPN & 1.147 & 1.034 & 1.127  & 1.261 \\
\hline


\multirow{5}{*}{\rotatebox[origin=c]{0}{\makecell{\textbf{Vibration }\\\textbf{Cost} \\ \textbf{(lower is better)}}}}  
 & DWA \cite{DWA} & 2.334 & 1.678 & 1.518  & 3.652   \\
 & TERP \cite{weerakoon2021terp} & 1.199 & \textbf{1.279} & 1.627  & 4.156 \\
 & OCRNet \cite{ocrnet} & 0.893 & 2.115 & 1.393  & 4.378 \\
 & PSPNet \cite{pspnet} & 0.967 & 2.384 & 1.424  & 4.456 \\
 & TerraPN & \textbf{0.766} & 1.329 & \textbf{1.166}  & \textbf{2.886} \\
\hline

\multirow{5}{*}{\rotatebox[origin=c]{0}{\makecell{\textbf{Mean }\\\textbf{Velocity} \\ \textbf{(lower is better)} }}} & DWA \cite{DWA} & 0.581 & 0.564 & 0.531 & 0.542   \\
 & TERP \cite{weerakoon2021terp} & 0.544 & 0.506 & 0.525 & 0.522 \\
 & OCRNet \cite{ocrnet} & 0.561 & 0.532 & 0.529 & 0.515 \\
 & PSPNet \cite{pspnet} & 0.548 & 0.526 & 0.509 & 0.513 \\
 & TerraPN & \textbf{0.347} & \textbf{0.336} & \textbf{0.271} & \textbf{0.296} \\
\hline

\end{tabular}
}
\caption{\small{Relative performance of our method TerraPN compared to other methods on various metrics. We observe that TerraPN leads to the highest success rates in all the scenarios. Further, TerraPN results in vibration cost decrease up to 21.52\%, mean velocity reduction up to 46.76\%, and shorter or comparable trajectory lengths than the other segmentation methods in most of the scenarios. DWA  and TERP  take  shorter  trajectories in  some  cases  since  they  directly  move  towards  the  goal in  the  absence  of  obstacles  and elevation  changes, without considering surface properties. }
}
\label{tab:comparison_table}
\vspace{-10pt}
\end{table}

\begin{table}[]
\centering
\resizebox{0.9\columnwidth}{!}{
\begin{tabular}{|c|c|c|}
\hline
\textbf{Method} & \textbf{Inference Time (sec)} & \textbf{Training Time}  \\
\hline
OCRNet &  0.052 & 10hrs and 5mins\\
\hline
PSPNet & 0.045 & 2hrs and 47mins\\
\hline
CGNet & \textbf{0.015} & 9hrs and 53mins \\
\hline
TerraPN-50 &  0.055 & \textbf{20-25mins}\\
\hline
TerraPN-non-uniform &  0.029 & \textbf{20-25mins} \\
\hline
\end{tabular}
}
\caption{\small{TerraPN's inference time and training time compared to existing semantic segmentation methods are executed on a laptop with Nvidia RTX2080 GPU. We observe that TerraPN's non-uniform sampling approach reduces the inference time significantly compared to PSPNet, OCRNet, and uniform sampling of $50 \times 50$ patches (TerraPN-50). Even though CGNet outperforms all the other methods in terms of inference time, it needs $\sim 10$ hours of training time to achieve satisfactory segmentation accuracy. However, TerraPN achieves cost prediction results suitable for outdoor navigation within $\sim 25$ minutes of training for 5 different surfaces.}}
\label{tab:inference-time}
\vspace{-15pt}
\end{table}

\subsubsection{Cost Map Prediction Performance} 
Fig. \ref{fig:cost-map-predict} shows the results of our cost map prediction in scenarios with various surfaces, both seen during training (tiles, grass) and unseen (soil, rocks, human obstacles). For unseen surfaces, our network typically predicts higher costs for navigation, which results in cautious, slow trajectories near such surfaces. We observe a clear differentiation of the surfaces (blue denoting low-cost surfaces) based on the robot's current velocity. Additionally, non-uniform patch sampling uses smaller patches in certain regions of interest such as the boundaries between different surfaces, and even over a pedestrian's shoes. In all other regions, larger patches are used, achieving a good balance between the fineness of the cost map and computation time.

OCRNet and PSPNet mostly segment different surfaces as traversable (green) and moderately traversable (cyan). However, in many instances small regions of forbidden (red) and obstacle (blue) classifications are observed. This results in wandering trajectories during navigation in many scenarios. In some cases with grass and asphalt, both surfaces are considered to be equally traversable. TerraPN on the other hand, clearly prefers asphalt over grass. This reflects the discrepancy between obtaining labels from the robot-terrain interactions (in TerraPN), versus human annotations (prior segmentation methods). 

In the case with red tiles and concrete (row 3), TerraPN predicts both surfaces to have similar surface costs for the Husky robot, with high costs at the boundary between them (with soil). TerraPN accurately correlates the texture of the red tiles (previously unseen) with the tiles it has seen during training (similar to row 1 in Fig. \ref{fig:cost-map-predict}). OCRNet predicts that the red tile surface is forbidden, and concrete to be moderately traversable. PSPNet is more accurate. However, it too has incorrect forbidden segmentation in the red tiles in some instances. 

TerraPN sometimes does get confused. For example, in row 4, the different shades of the asphalt are considered to be different surfaces and slightly different costs are predicted (green versus blue). However, the small change in the predicted surface cost does not affect the robot's trajectories.

\subsubsection{Navigation Performance}
We evaluate TerraPN's navigation performance both quantitatively (Table \ref{tab:comparison_table}) and qualitatively (Fig. \ref{fig:cover-image} and \ref{fig:traj_comparison}) in our test scenarios. We do not report the metrics for scenario 5 since other methods mostly failed to reach the goal. Quantitatively, we observe that TerraPN leads to the highest success rates in all the scenarios. Other methods failed to reach the goal by either traversing surfaces where the robot got stuck such as rocks (scenario 2), or mud (scenario 3, 4). Or, in the cases of OCRnet and PSPnet, their trajectories wandered away from the goal due to the mis-classification of certain surfaces as forbidden. 

In terms of trajectory lengths, TerraPN is comparable or lower to the navigation methods based on OCRnet \cite{ocrnet} and PSPnet \cite{pspnet}. This is again due to wandering trajectories as explained before. DWA and TERP take shorter trajectories in some cases since they directly move towards the goal in the absence of obstacles and major elevation changes, without considering surface variations. However, in the cases where the normalized length is less than 1, DWA and TERP wrongly assumed that the robot had reached its goal due to odometry errors (due to wheel slip) while navigating with high velocities on high-cost surfaces. Although TerraPN's trajectories deviate to avoid high-cost surfaces, their normalized lengths are close to one. This is due to the influence of the \textit{head()} term in the objective function.

TerraPN also outperforms the other methods in terms of vibration costs in three scenarios since its trajectories were on smooth, low-cost surfaces as much as possible (see Fig. \ref{fig:cover-image} and \ref{fig:traj_comparison}) with appropriate speeds. In scenario 2, TERP outperforms TerraPN due to the presence of concrete curbs. TerraPN could not distinguish the elevation changes between the ground and the curb, and moved over its edge near the rocks (see Fig. \ref{fig:traj_comparison}b) in some trials. Since TERP uses elevation maps to compute cost maps, it takes an overly conservative trajectory to avoid the curb. OCRNet and PSPNet have higher vibration costs although they possess semantic information about the terrains due to their longer, meandering trajectories.

In terms of mean velocity, we observe that TerraPN varies the robot's speed when navigating on different surfaces in the various scenarios (see yellow and red trajectories in Figs. \ref{fig:cover-image}, \ref{fig:traj_comparison}). All other methods move towards the robot's goal close to its maximum velocity (0.6 m/s). TerraPN's overall lower mean velocity varies depending on the number and type of surfaces in the scenario (which also reflects on the vibration cost). The regions where TerraPN slowed down the robots can be seen in Figs. \ref{fig:cover-image} and \ref{fig:traj_comparison}. In the highly unstructured scenario 5, only TERP and TerraPN reached the robot's goal. TerraPN took a slow, cautious speed towards the robot's goal while TERP took a slow winding path based on the elevation changes it sensed in the terrain. DWA moved fast directly towards the goal and stopped short of it due to odometry errors caused due to the surface's poor traction. 

\subsubsection{Inference and Training Times}
Table \ref{tab:inference-time} compares TerraPN's cost prediction inference time with various semantic segmentation methods trained for outdoor environments. TerraPN's inference time when using a uniform patch sampling of $50 \times 50$ patches, is comparable to OCRNet and PSPNet. However, our non-uniform patch sampling when using patches of side lengths 50, 100, and 200 reduces the inference time by $47.27\%$ compared to uniform sampling. For comparison, we also include CGNet \cite{cgnet}, an extremely light-weight ($<500K$ parameters) segmentation network. 

However, TerraPN's cost prediction network trains in $\sim 20-25$ minutes to achieve a performance suitable for navigation compared to $\sim 3-10$ hours need to train other networks for satisfactory segmentation accuracy. Additionally, using TerraPN's cost map leads to superior navigational performance, as observed before. Additional results and analysis can be found in \cite{terrapn-arxiv}.
\section{Conclusions, Limitations and Future Work}

We present TerraPN, a novel approach that uses self-supervised learning to identify the surface characteristics in complex outdoor environments to perform autonomous robot navigation. Our method incorporates RGB images of surfaces and the robot’s velocities as inputs, and the IMU vibrations and odometry errors encountered by the robot while  traversing on a surface as labels for self-supervision. The trained self-supervised network outputs a surface navigability cost map that differentiates smooth, high-traction surfaces from bumpy, deformable surfaces. We introduce a novel navigation algorithm that accounts for the surface cost, computes cost-based acceleration limits for the robot, and computes dynamically feasible and collision-free trajectories. We validate our approach on real-world unstructured terrains and compare it with the state-of-the-art navigation techniques on various navigation metrics.

Our approach has a few limitations. TerraPN must navigate on a surface to estimate the corresponding navigability costs. This strategy cannot be applied on completely non-traversable surfaces (e.g. swampy terrains). Significant lighting changes in the environment could adversely affect the surface cost prediction. Further analysis on the number of surfaces our method can learn effectively must be conducted. TerraPN cannot distinguish subtle elevation changes in the same or similar surfaces due to the unavailability of depth and elevation inputs to the system. To this end, an intelligent multi-sensor fusion-based approach could be utilized to identify both surface and geometric properties in challenging outdoor conditions in the future.  

\bibliographystyle{IEEEtran}
\bibliography{References}
\end{document}